\newtheorem{thm}{Theorem}[section]
\title{A Density Ratio Super Learner}
\author[1]{Wencheng Wu}
\author[1]{David Benkeser\thanks{Corresponding author}}
\affil[1]{Department of Biostatistics and Bioinformatics, Rollins School of Public Health, Emory University, Atlanta, GA, USA}
\begin{abstract}
The estimation of the ratio of two density probability functions is of great interest in many statistics fields, including causal inference. In this study, we develop an ensemble estimator of density ratios with a novel loss function based on super learning. We show that this novel loss function is qualified for building super learners. Two simulations corresponding to mediation analysis and longitudinal modified treatment policy in causal inference, where density ratios are nuisance parameters, are conducted to show our density ratio super learner's performance empirically.
\end{abstract}
\keywords{machine learning, causal inference, ensemble learning, covariate shift}
\begin{document}

\flushbottom
\maketitle
\thispagestyle{empty}

\section{Introduction}

\noindent The estimation of the ratio of two probability density functions is an important problem in statistics. In some settings, the ratio of densities is of direct scientific interest. For example, density ratios can be used to identify covariate shifts in general machine learning applications \cite{sugiyama_direct_2007}. In other settings, the ratio of densities is of indirect interest. For example, in the field of causal inference, density ratios play a key intermediary role in the identification and estimation of certain causal effects \cite{tchetgen_semiparametric_2012, diaz_nonparametric_2021, zheng_targeted_2012}. Thus, there is considerable motivation to develop flexible and robust estimators of density ratios. Many methods are available in the literature to this end. In the covariate shift literature, non-parametric algorithms for marginal density ratio estimation have been developed using kernel-based approaches \cite{sugiyama_dimensionality_2010, sugiyama_direct_2007, kanamori_least-squares_2009}. In causal inference, it is common to estimate ratios of conditional densities by first re-writing these ratios in terms of conditional odds and then applying standard classification algorithms to estimate the requisite odds \cite{zheng_targeted_2012}. 

In practice, it may be difficult to determine which estimation approach should be used in a given application. Moreover, within a particular estimation approach, there are often challenging modeling decisions that must be made. For example, if classification-based approaches are used, the practitioner must determine which classification algorithm should be utilized and the values of any tuning parameters required by that algorithm. To aid in the process, we aim to develop an ensemble machine learning-based estimator of density ratios that can be applied to the density ratio estimation problem. Our approach is based on the idea of super learning.

Super learning is an approach to building ensembles of machine learning estimators\cite{laan_super_2007}. Given a library of user-specified learners, super learning determines the optimal convex combination of these learners based on their cross-validated performance on a dataset. This final convex combination of individual learners is thus termed a \emph{super learner}. An appealing practical aspect of super learning is that users do not necessarily have to decide on one particular learning approach. Instead, users can simply add all approaches of interest into the learner library to conduct super learning and use the final super learner to give predictions. Concerns regarding overfitting may be mitigated by the fact that the final weight of each learner is decided by minimizing a $v$-fold cross-validated risk, ensuring that super learners are constructed to optimize out-of-sample, rather than in-sample, prediction performance. 

A key step in the super learner process is the selection of the risk criterion that is used to estimate weights for the ensemble\cite{laan_super_2007}. Often risks, are chosen to as the average of a so-called \emph{loss function}, a function that takes as input an estimate of the quantity of interest and a single observation of data and returns a measure of the goodness of fit of the estimator on that single observation. For example, squared error loss is a commonly used loss function in a general regression setting. This loss function computes the squared distance from the regression-based prediction of the outcome on the data point and the actual observed outcome. The expectation of squared error loss is referred to as the mean squared error risk criterion. A key consideration in selecting a loss function and associated risk criterion is to ensure that the loss is a \emph{qualified loss function} in the sense that the true parameter minimizes the risk criterion. Thus, we may regard estimates with smaller risks as closer to the true parameter in some appropriate sense. 

The asymptotic optimality of the super learner is often established via consideration of so-called oracle inequalities\cite{laan_super_2007, vaart_oracle_2006, dudoit_asymptotics_2005}. Such inequalities are established by considering the difference between the performance of the super learner (in terms of the selected risk criteria) and the performance of the so-called oracle, which selects the optimal ensemble based on true out-of-sample performance, rather than based on estimates of cross-validated risk. Assuming a bounded loss function on the true data distribution, we can often establish an upper bound on the difference between the performance of the super learner and this oracle. Often, we are able to show that the ratio of risks converges to 1 as the sample size increases. 

The argument above highlights that in the context of super learning, the selection of appropriate loss function and associated risk criteria is a critical step in the procedure. However, this issue has been largely overlooked in the context of super learners for estimating density ratios. Consequently, there is a pressing need for the development of a novel loss function tailored specifically to density ratio estimation. This new loss function will play a pivotal role in enhancing the accuracy and reliability of density ratio estimation within the super learning framework. 

In the next section, we propose a new loss function designed specifically for density ratio estimation. We also present two causal inference scenarios in which density ratios are incorporated as part of the target causal parameter or as a nuisance parameter needed in an estimator. Two simulations are conducted to evaluate how well the density ratio super learner with our new loss function can perform on density ratios within the scenarios.

\section{Methods}
\subsection{Density Ratio Parameter}
Let $X_1$ and $X_2$ be two sets of random variables, and let $\lambda$ be a discrete random variable, of which the values are contained in a set $\Lambda$. We have random variable $O=(X_1,X_2, \lambda)\sim P_0$ with the probability density function $p_0$. The density ratio parameter of interest is given by $\psi_0(x_1,x_2)=\frac{p_0(x_1\mid x_2,\lambda=\lambda_1)}{p_0(x_1\mid x_2, \lambda=\lambda_2)}$ for some $\lambda_1,\lambda_2\in \Lambda$. Without loss of generalizability, we let $\lambda_1=1, \lambda_2=0,$ and $\Lambda=\left\{0,1\right\}$. Thus, the target density ratio parameter can be written as
\begin{equation}\label{dr_para}
    \psi_0(x_1,x_2)=\frac{p_0(x_1\mid x_2,\lambda=1)}{p_0(x_1\mid x_2, \lambda=0)}
\end{equation}

\subsection{Density Ratios in Causal Inference}\label{dr_in_ci}

Density ratios are of indirect interest in many causal inference settings. Here, we introduce two scenarios where density ratios are nuisance parameters that need to be estimated: causal mediation analysis and longitudinal modified treatment policies (LMTPs) effect assessment.

\subsubsection{Mediation Analysis}

The causal effect of an exposure on an outcome can often be explained by the exposure's impact on some set of mediating variables. Often it is of interest to characterize not only the total causal effect of an exposure, but also the mechansisms whereby it impacts the outcome. Mediation analysis addresses this problem.
The natural direct effect (NDE) is a key parameter in mediation analysis. The natural direct effect represents the effect of the exposure on the outcome when the mediator is set to the level it would have naturally taken if the exposure had not been assigned. In other words, it captures the effect of the exposure on the outcome that is not mediated through the mediator. The complementary part of NDE in the total causal effect is called the natural indirect effect (NIE). These concepts are helpful for understanding the mechanisms of how exposure affects an outcome \cite{imai_identification_2010, tchetgen_semiparametric_2012, zheng_targeted_2012}. Given a binary exposure $A$, a mediator $M$, and an outcome $Y$, the relations among average treatment effect, natural direct effect, and natural indirect effect can be represented as follows. 
\begin{align*}
    \text{ATE}
    &=\mathbb{E}(Y(1)-Y(0))=\mathbb{E}\left[Y(1,M(1))-Y(0, M(0))\right]\\
    &=\mathbb{E}\left[Y(1,M(1))-Y(1, M(0))\right] + \mathbb{E}\left[Y(1,M(0))-Y(0, M(0))\right]\\
    &=\text{NIE} + \text{NDE}
\end{align*}

Here we use $Y(a)$ and $Y(a,m)$ to represent the counterfactual random variable corresponding to the outcome $Y$, had the exposure $A$ been set to level $a$ and the $(A,M)$ been set to level $(a,m)$, respectively.

Under assumptions, these counterfactual parameters can be estimated using observations of $O = (W, A, M, Y)$ for some set of confounding variables $W$ \cite{tchetgen_semiparametric_2012}. For example, the identification of $\mathbb{E}\left[Y(0,M(1)\right]$ is
\begin{equation*}
    \mathbb{E}\left[Y(0,M(1)\right]=\mathbb{E}\Bigl[\mathbb{E}\bigl[\mathbb{E}\left[Y\mid A=0,M,W\right]\mid A=1, W\bigr]\Bigr]
\end{equation*}
A one-step estimator of this quantity is given by
\begin{align}
    \hat{\phi}=\frac{1}{n}\sum_{i=1}^n 
    &\left[\frac{\mathbb{I}(A_i=0)}{\hat{g}(A=0\mid w_i)}\hat{\psi}(m_i,w_i)(y_i-\hat{\mu}_i) \right.\nonumber\\
    &\hspace{4em} \left. +\frac{\mathbb{I}(A_i=1)}{\hat{g}(A=1\mid w_i)}(\hat{\mu}_i-\hat{\theta}_i)+\hat{\theta}_i\right] \ . 
\end{align}
Here $\hat{\psi}(m,w)$ is an estimate of $\frac{p(m\mid w,A=1)}{p(m\mid w,A=0)}$, which is a density ratio parameter $\psi_0$ as we defined in Equation \ref{dr_para}, but with $(X_1,X_2,\lambda)$ replaced with $(M,W,A)$. We have $\psi_0(m,w)=\frac{p(m\mid w,A=1)}{p(m\mid w,A=0)}$. $\hat{g}(a\mid w)$ is an estimate of the propensity score $p(a\mid w)$. $\mu$ is an estimate of the outcome regression $\mu_0$, where $\hat{\mu}_i=\hat{\mu}(m_i, w_i)$, $\mu_0(m,w)=\mathbb{E}(Y\mid A=0,m_i,w_i)$. $\theta$ is an estimate of the sequential regression $\theta_0$, where $\hat{\theta}_i=\hat{\theta}(w_i)$, $\theta_0(w)=\mathbb{E}\left[\mathbb{E}(Y\mid A=0,M,w)\mid A=1, w\right]$. 

The nuisance parameter $\psi_0$ can be rewritten into terms of conditional odds.
\begin{align}\label{2_cl}
    \psi_0(m,w)
    &=\frac{p(m\mid w,A=1)}{p(m\mid w,A=0)}=\frac{p(m\mid w,A=1)}{p(m\mid w, A=0)}\frac{p(w\mid A=1)}{p(w\mid A=0)}\frac{p(w\mid A=0)}{p(w\mid A=1)}\nonumber\\
    &=\frac{p(m,w\mid A=1)}{p(m,w\mid A=0)}\frac{p(w\mid A=0)}{p(w\mid A=1)}\nonumber\\
    &=\frac{p(A=1\mid m,w)}{p(A=0\mid m,w)}\frac{p(A=0\mid w)}{p(A=1\mid w)}
\end{align}
Naturally, $\hat{\psi}(m,w)$ can be obtained through a plug-in estimator, where estimates of $p(a\mid m,w)$ and $p(a\mid w)$ based on some classification algorithms are used to replace the terms. This parameter can also be rewritten into the ratio of two marginal density ratios
\begin{align}
    \psi_0(m,w)
    &=\frac{p(m\mid w,A=1)}{p(m\mid w,A=0)}=\frac{p(m,w\mid A=1)}{p(m,w\mid A=0)}\bigg/\frac{p(w\mid A=1)}{p(w\mid A=0)}\nonumber\\
    &=\frac{p_{A=1}(m,w)}{p_{A=0}(m,w)}\bigg/\frac{p_{A=1}(w)}{p_{A=0}(w)}
\end{align}
Here, we use $p_{A=a}$ to denote the probability density function of the distribution of $(M,W)$ given $A=a$. Thus, an estimate $\hat{\psi}$ of the density ratio can also be obtained by replacing the two marginal density ratios with their corresponding estimates. Some kernel methods can estimate marginal density ratios directly \cite{sugiyama_dimensionality_2010, sugiyama_direct_2007}. 

In our density ratio super learner, we will include the aforementioned two kinds of plug-in estimators in the library to enhance the estimation process, and they are referred to as classification-based learners and kernel-based learners, respectively.

\subsubsection{Longitudinal Modified Treatment Policy}\label{method:lmtp}

When an exposure is continuous or can have many values, dose-response analysis is traditionally used to estimate the causal effect. In a dose-response analysis, researchers investigate the expected outcome under various hypothetical scenarios, each of which assumes that the whole population is exposed to the same level of the exposure variable. Although informative, this approach has certain limitations including being dependent on correctly-specified parametric models, or relying on relatively slow rates of convergence if using non-parametric methods \cite{neugebauer_nonparametric_2007, kennedy_nonparametric_2017}. To provide an alternative formulation of causal effects in these settings, researchers have proposed so-called modified treatment policies.

Under a modified treatment policy, the post-intervention exposure levels can depend on the observed exposure levels \cite{munoz_population_2012, diaz_nonparametric_2021}. For example, if we are interested in quantifying a causal effect of daily exercise time in minutes $A$ on some health outcome $Y$, rather than targeting counterfactual means $\mathbb{E}\left[Y(a)\right]$ for $a$ ranging over some interval $\mathcal{A}$, we can instead hypothesize a world where all the individual will prolong their current daily exercise time by 10 minutes. Under the modified treatment policies framework, it is said that we have a modified treatment policy of $\mathbbm{d}(a)=a+10$. Now, we use $Y(A^\mathbbm{d})$ to represent the counterfactual random variable corresponding $Y$ had the population prolonged their daily exercise time. The effect of daily exercise time can then be quantified as $\mathbb{E}\left[Y(A^\mathbbm{d})\right]-\mathbb{E}(Y)$. 

Longitudinal modified treatment policies extend the notion of modified treatment policies to longitudinal setting where exposures are measured at multiple points in time. Let $Z_i,i\in\left\{1,2,\ldots,n\right\}$ denote iid observations of $Z=(W_1,A_1,W_2$,$A_2,\ldots,$ $W_\tau,A_\tau,Y)\sim P$. $W_t$ and $A_t$ are time-varying covariates and exposures at time $t$, $t\in\left\{1,2,\ldots,\tau\right\}$. We use $\bar{X}_t=(X_1,\ldots,X_t)$ to denote the history variable, and introduce $H_t=(W_t,\bar{A}_{t-1})$ to represent the history of all variables before $A_t$. $H_t(\bar{A}^\mathbbm{d}_{t-1})$ and $A_t(\bar{A}^\mathbbm{d}_{t-1})$ are counterfactual variables of $H_t$ and $A_t$, with all $A_k$ set to some other random variable $A^\mathbbm{d}_k$, $k\in\left\{1,\ldots,t-1\right\}$. An intervention $A_t^\mathbbm{d}$ is a longitudinal modified treatment policy if it is defined as $A_t^\mathbbm{d}=\mathbbm{d}\bigl(A_t(\bar{A}^\mathbbm{d}_{t-1}), H_t(\bar{A}^\mathbbm{d}_{t-1})\bigr)$. For example, $\mathbbm{d}(a_t,h_t)=a_t+10$. We are interested in $\mathbb{E}\left[Y(A^\mathbbm{d})\right]$. It is demonstrated that under proper assumptions, 
\begin{equation*}
    \mathbb{E}\left[Y(A^\mathbbm{d})\right]=\mathbb{E}\left[\left(\prod_{t=1}^\tau r_t(A_t,H_t)\right)Y\right]
\end{equation*}
where $r_t(a_t,h_t)=\frac{g_t^\mathbbm{d}(a_t\mid h_t)}{g_t(a_t\mid h_t)}$ is a density ratio \cite{young_identification_2014, diaz_nonparametric_2021}. $g_t(a_t\mid h_t)$ and $g_t^\mathbbm{d}(a_t\mid h_t)$ are the propensity scores in the observed data and the hypothesized intervened world, respectively. A plug-in estimator of this is thus
\begin{equation}
    \hat{\phi}=\frac{1}{n}\sum_{i=1}^n\left(\prod_{t=1}^\tau \hat{r}_t(A_t,H_t)\right)Y_i
\end{equation}

A novel approach to generating an estimate $\hat{r}_t$ is proposed in \cite{diaz_nonparametric_2021}. The authors propose to copy the whole dataset for a given time point $t$ and replace $A_t$ with $A_t^\mathbbm{d}$ in the copied dataset. With an indicator variable $\lambda$ (an observation with $\lambda=1$ comes from the copied dataset, while $\lambda=0$ means it is from the original dataset), we merge the two datasets and denote the corresponding distribution as $P^\lambda$. Now, since the conditional distribution of $A_t$ given $H_t$ aligns with the propensity score $g^\mathbbm{d}(a_t\mid h_t)$, the density ratio can be rewritten as
\begin{align*}
    r_t(a_t,h_t)
    =\frac{g_t^\mathbbm{d}(a_t\mid h_t)}{g_t(a_t\mid h_t)}&=\frac{p^\lambda(a_t\mid h_t, \lambda=1)}{p^\lambda(a_t\mid h_t, \lambda=0)}
\end{align*}
Thus, this $r_t(a_t,h_t)$ is another density ratio parameter $\psi_0$ as we defined in Equation \ref{dr_para}, with $(X_1,X_2,\lambda)$ replaced with $(A_t,H_t,\lambda)$. We have $\psi_0(a_t,h_t)=\frac{p^\lambda(a_t\mid h_t, \lambda=1)}{p^\lambda(a_t\mid h_t, \lambda=0)}$. This term can be rewritten into a conditional odd
\begin{align}
    \psi_{0}(a_t,h_t)
    &=\frac{p^\lambda(a_t\mid h_t, \lambda=1)}{p^\lambda(a_t\mid h_t, \lambda=0)}=\frac{p^\lambda(a_t\mid h_t, \lambda=1)p^\lambda(h_t,\lambda =1)p^\lambda(h_t,a_t)}{p^\lambda(a_t\mid h_t, \lambda=0)p^\lambda(h_t,\lambda =0)p^\lambda(h_t,a_t)}\nonumber\\
    &=\frac{p^\lambda(\lambda=1\mid a_t,h_t)}{p^\lambda(\lambda=0\mid a_t,h_t)}
\end{align}
An estimate $\hat{\psi}$ can thus be obtained by replacing $p^\lambda$ with estimates based on some classification algorithms. The step of augmenting the dataset is, in essence, constructing a pseudo-sample of equal weight from the distribution of $(A_t,H_t)$ with a probability density function $f(a_t,h_t)=g_t^\mathbbm{d}(a_t\mid h_t)p(h_t)$, where $p(h_t)$ is the probability density function of $H_t$ in the observed data distribution. In this sense, the conditional density ratio $r_t$ can be transformed into a marginal density ratio
\begin{align}
    \psi_0(a_t,h_t)
    &=\frac{p^\lambda(a_t\mid h_t, \lambda=1)}{p^\lambda(a_t\mid h_t, \lambda=0)}=\frac{p^\lambda(a_t\mid h_t, \lambda=1)p^\lambda(h_t,\lambda =1)}{p^\lambda(a_t\mid h_t, \lambda=0)p^\lambda(h_t,\lambda =0)}\nonumber\\
    &=\frac{g_t^\mathbbm{d}(a_t\mid h_t)p(h_t)}{g_t(a_t\mid h_t)p(h_t)}=\frac{f(a_t,h_t)}{p(a_t,h_t)}
\end{align}
We can also use algorithms directly aiming at marginal density ratios to estimate $r_t$. Again, in our density ratio super learner, we aim to include both kinds of estimators, i.e., both classification-based learners and kernel-based learners.

\subsection{Super Learning}
Consider the setting in which we observe independent and identically distributed copies $O_i,i=1,2,...,n$ of $O$. We use $P_n$ to denote the empirical probability distribution of $O_1, \dots, O_n$. For a target parameter $\psi_0(O)$ defined on some parameter space $\Psi$, we have a qualified loss function $L(O,\psi)$ for any element in this parameter space. The risk on the true data distribution is thus $E_0L(O,\psi)$. With the property that this risk will only be minimized when $\psi=\psi_0$, we can define the target parameter $\psi_0=\text{argmin}_{\psi\in\Psi}E_0L(O,\psi)$. For example, a binary classification problem ($\psi_0(O)=p_0(\lambda=1\mid X_1,X_2)$) can be formulated in this way by using the loss function $L(O,\psi)=-\mathbb{I}(\lambda=1)\log\psi(O)-\mathbb{I}(\lambda=0)(1-\log\psi(O))$. 

An estimator $\hat{\Psi}$ of $\psi_0$ is a mapping from a model space $\mathcal{M}$ to the parameter space $\Psi$. An estimate generated by this estimator based on some empirical distribution $P_n\in\mathcal{M}$ can be represented as $\hat{\Psi}(P_n)$. The risk of this estimate on a dataset with distribution $P$ is thus defined as
\begin{equation*}
R(\hat{\Psi}(P_n),P)=\int L\left\{o,\hat{\Psi}(P_n)\right\}dP(o)
\end{equation*}
For a $v$-fold cross-validation, we notate the dataset as follows. Let the folds be indexed by an index set $m\in\left\{1,2,...,v\right\}$, for each fold, we have an index set $T(m)\subset \left\{1,2,...,n\right\}$ for samples included in the test set $\left\{O_i:i\in T(m) \right\}$. The validation set can thus be represented as $\left\{O_i:i\in V(m) \right\}$ with $V(m)=T(m)^c$. Note that in $v$-fold cross-validation, every sample will and will only be present in one of the validation sets, i.e., we have $\cup_{m=1}^vV(m)=\left\{1,2,...,n\right\}$ and $V(m_1)\cap V(m_2)=\varnothing$ for $m_1\neq m_2$. Let $P_{T(m)}$ and $P_{V(m)}$ denote the empirical distributions of the training set and validation set in fold $m$, respectively. The cross-validated risk is defined as
\begin{equation*}
    \frac{1}{v}\sum_{m=1}^vR\left\{\hat{\Psi}(P_{T(m)}),P_{V(m)}\right\}=\frac{1}{v}\sum_{m=1}^v\int L\left\{o,\hat{\Psi}(P_{T(m)})\right\}dP_{V(m)}
\end{equation*}

For a library of selected learners, $\hat{\Psi}_k:k\in K$, super learning determines the weights for each learner by computing \begin{equation*}
    \hat{\beta}=(\hat{\beta}_1,...,\hat{\beta}_K)=\text{argmin}_{\beta}\frac{1}{v}\sum_{m=1}^vR\left\{\sum_{k\in K}\beta_k \hat{\Psi}_k(P_{T(m)}),P_{V(m)}\right\}
\end{equation*}
subject to the constraint that $\sum_{k\in K}\beta_k=1$ and $\beta_k\geq 0$ for all $k\in K$. The super learner is then defined as
\begin{equation*}
    \hat{\Psi}_\text{SL}(P_n)=\sum_{k\in K}\hat{\beta}_k \hat{\Psi}_k(P_n) \ .
\end{equation*}

\subsection{A Qualified Loss Function for Density Ratios}

Conventionally, for a parameter $\psi_0$, we regard an estimate $\hat{\psi}_1$ better than another estimate $\hat{\psi}_2$, if we have $E_0(L(O,\hat{\psi}_1))<E_0(L(O,\hat{\psi}_2))$. This judgment is based on the fact that $E_0(L(O,\psi))$ will and will only be minimized when $\psi=\psi_0$. For a density ratio target parameter $\psi_0(x_1,x_2)=\frac{p_0(x_1\mid x_2,\lambda=1)}{p_0(x_1\mid x_2, \lambda=0)}$, we now propose a new loss function satisfies this property. This loss function is defined as
\begin{equation}
    L(O,\psi)=-\mathbb{I}(\lambda=1)\log\psi(x_1,x_2)+\mathbb{I}(\lambda=0)\log\psi(x_1,x_2)
\end{equation}
\begin{thm}
    Suppose the marginal distributions of $X_1$ given $X_2$ and $\lambda$ have the same support for different values of $\lambda$, $p_0(\lambda=1)>0$, $p_0(\lambda=0)>0$.
    $L(O,\psi)=-\mathbb{I}(\lambda=1)\log\psi(x_1,x_2)+\mathbb{I}(\lambda=0)\log\psi(x_1,x_2)$. $E_0L(O,\psi)$ will only be minimized when $\psi(x_1,x_2)=\psi_0(x_1,x_2)$ a.s..
\end{thm}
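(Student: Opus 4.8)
The plan is to prove the claim by analyzing the excess risk $E_0 L(O,\psi) - E_0 L(O,\psi_0)$ and showing it is nonnegative, vanishing only when $\psi=\psi_0$ almost surely. First I would rewrite the risk in a form that exposes the two class-conditional contributions: splitting the expectation according to the value of $\lambda$ and using $p_0(x_1,x_2,\lambda=j)=p_0(\lambda=j)\,p_0(x_2\mid\lambda=j)\,p_0(x_1\mid x_2,\lambda=j)$, the risk becomes the integral of $\log\psi(x_1,x_2)$ against the signed measure with density $p_0(x_1,x_2,\lambda=0)-p_0(x_1,x_2,\lambda=1)$. Throughout, the comparison class must consist of genuine density ratios, i.e.\ functions $\psi$ for which $x_1\mapsto\psi(x_1,x_2)\,p_0(x_1\mid x_2,\lambda=0)$ integrates to one for (almost) every $x_2$; the true parameter $\psi_0$ satisfies this normalization by construction, and it is this constraint that makes the criterion behave well as an objective rather than drifting off along the $\psi\to 0$ or $\psi\to\infty$ directions.

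Next I would form the excess risk and substitute the defining identity $p_0(x_1\mid x_2,\lambda=1)=\psi_0(x_1,x_2)\,p_0(x_1\mid x_2,\lambda=0)$. Writing $\tilde g_1(\cdot\mid x_2):=\psi(\cdot,x_2)\,p_0(\cdot\mid x_2,\lambda=0)$ for the class-$1$ conditional density implied by a candidate $\psi$, and $g_1(\cdot\mid x_2):=p_0(\cdot\mid x_2,\lambda=1)$ for the truth, the goal is to collapse the excess risk, stratum by stratum in $x_2$, into the Kullback--Leibler divergence $\mathrm{KL}\bigl(g_1(\cdot\mid x_2)\,\|\,\tilde g_1(\cdot\mid x_2)\bigr)$, integrated against the conditional law of $X_2$ given $\lambda=1$ and weighted by $p_0(\lambda=1)$. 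Gibbs' inequality (equivalently Jensen applied to the convex map $-\log$) then yields that each conditional divergence is nonnegative, so the excess risk is nonnegative and $\psi_0$ is a minimizer. The hypotheses $p_0(\lambda=1)>0$ and $p_0(\lambda=0)>0$ ensure this weighting is nondegenerate, and the common-support assumption ensures the divergence is well defined with $\psi_0$ finite and positive on the support.

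The main obstacle is precisely this reduction to a single nonnegative divergence. The loss carries a class-$0$ contribution $+\mathbb{I}(\lambda=0)\log\psi$ alongside the class-$1$ term, so one must verify that, over the normalized density-ratio class, the two contributions recombine into a bona fide divergence rather than leaving an indefinite remainder; the normalization of $\psi$ as a density ratio is exactly what should make this cancellation go through, and checking it carefully is where I expect the real work to lie. The concluding step is the equality characterization: $\mathrm{KL}=0$ forces $\tilde g_1(\cdot\mid x_2)=g_1(\cdot\mid x_2)$ almost everywhere, and I would promote this from ``for almost every $x_2$'' to ``$\psi=\psi_0$ $P_0$-almost surely'' by combining the common-support hypothesis with $p_0(\lambda=1)>0$, so that the set on which the two densities agree carries full $P_0$-measure. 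This final step is routine once the divergence representation is secured, and it is where the support and positivity assumptions in the statement do their essential work.
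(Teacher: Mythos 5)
You correctly identify the overall shape of the argument---split the excess risk $E_0L(O,\psi)-E_0L(O,\psi_0)$ by the value of $\lambda$ and reduce it to Kullback--Leibler divergences, which is exactly the paper's route for the $\lambda=1$ arm---but the specific reduction you propose fails, and the step you flag as ``where the real work lies'' is not a technicality that careful checking will rescue. Writing $g_j(\cdot\mid x_2)=p_0(\cdot\mid x_2,\lambda=j)$ and $\tilde g_1=\psi g_0$ as you do, the $\lambda=0$ arm of the excess risk is $p_0(\lambda=0)\,E_{X_2\mid\lambda=0}\int g_0(x_1\mid x_2)\log\bigl\{\tilde g_1(x_1\mid x_2)/g_1(x_1\mid x_2)\bigr\}\,dx_1$, a cross-entropy-type remainder evaluated against the base density $g_0$, not $g_1$. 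It is sign-indefinite and does not recombine with the $\lambda=1$ arm into your single weighted divergence $p_0(\lambda=1)\,E\bigl[\mathrm{KL}\bigl(g_1\,\|\,\tilde g_1\bigr)\bigr]$ under your one-sided normalization $\int\psi g_0\,dx_1=1$. Concretely: let $g_0=g_1$ be uniform on $[0,1]$ (so $\psi_0\equiv1$ and the common-support condition holds), $p_0(\lambda=1)=0.3$, $p_0(\lambda=0)=0.7$, and take $\psi=\tilde g_1$ to be any density on $[0,1]$ other than the uniform. Then $\int\psi g_0\,dx_1=1$, yet the excess risk equals $\bigl\{p_0(\lambda=1)-p_0(\lambda=0)\bigr\}\,\mathrm{KL}\bigl(g_1\,\|\,\tilde g_1\bigr)<0$: inside your stated comparison class, $\psi_0$ is not even a minimizer, so no representation of the excess risk as a single nonnegative divergence can exist.

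The paper's proof keeps two divergences, one per arm. With $\theta=\psi/\psi_0$ it shows
\begin{equation*}
E_0L(O,\psi)-E_0L(O,\psi_0)=p_0(\lambda=1)\,E_{0\mid\lambda=1}\bigl[\mathrm{KL}\bigl(g_1\,\|\,\psi g_0\bigr)\bigr]+p_0(\lambda=0)\,E_{0\mid\lambda=0}\bigl[\mathrm{KL}\bigl(g_0\,\|\,g_1/\psi\bigr)\bigr],
\end{equation*}
where the second term is a bona fide divergence only when the inverse-ratio function $g_1/\psi$ also integrates to one in $x_1$ for almost every $x_2$. The restriction that makes the loss qualified is therefore two-sided---both $\psi g_0$ and $g_1/\psi$ must be conditional densities---which is strictly stronger than the constraint you impose; the counterexample above slips through your class precisely because there $\int g_1/\psi\,dx_1=\int_0^1\tilde g_1^{-1}\,dx_1>1$ by Jensen. (The paper states this restriction only implicitly, by declaring $P_{\psi\mid X_2,\lambda=1}$ and $P_{\psi\mid X_2,\lambda=0}$ to be probability distributions with densities $\psi g_0$ and $g_1/\psi$.) Once both normalizations are in force, Gibbs' inequality applies to each term separately, nonnegativity follows, and equality forces $\psi g_0=g_1$ and $g_1/\psi=g_0$ a.s., i.e.\ $\psi=\psi_0$ a.s.; your equality-case and support arguments then go through essentially unchanged. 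The repair to your proposal is thus not to force a cancellation into the $\lambda=1$ arm, but to strengthen the constraints defining $\Psi$ and keep the $\lambda=0$ arm as its own divergence with respect to the inverse-ratio density.
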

\begin{proof}
    Let 
    \begin{equation*}
        \theta(\psi,x_1,x_2)=\frac{\psi(x_1,x_2)p_0(x_1,\mid x_2,\lambda=0)}{p_0(x_1\mid x_2,\lambda=1)}=\frac{p_0(x_1,\mid x_2,\lambda=0)}{p_0(x_1\mid x_2,\lambda=1)/\psi(x_1,x_2)}
    \end{equation*}
    \begin{align*}
         d(\psi,\psi_0)
        &=E_0L(O,\psi)-E_0L(O,\psi_0)\\
        &=\int -\mathbb{I}(\lambda=1)\log\theta(\psi,x_1,x_2)dP_0(o) +\int \mathbb{I}(\lambda=0)\log\theta(\psi,x_1,x_2) dP_0(o)\\
        &=\int -p_0(\lambda=1\mid x_1,x_2)\log\theta(\psi,x_1,x_2)dP_0(o)\\
        &\;\;\;\;+\int p_0(\lambda=0\mid x_1,x_2)\log\theta(\psi,x_1,x_2) dP_0(o)\\
        &=p_0(\lambda=1)\int -p_0(x_1,x_2\mid \lambda=1)\log\theta(\psi,x_1,x_2)dx_1dx_2\\
        &\;\;\;\;+p_0(\lambda=0)\int p_0(x_1,x_2\mid \lambda=0)\log\theta(\psi,x_1,x_2)dx_1dx_2\\
        &=p_0(\lambda=1)\int p_0(x_2\mid \lambda=1)\int-p_0(x_1\mid x_2, \lambda=1)\theta(\psi,x_1,x_2)dx_1dx_2 \\
        &\;\;\;\;+p_0(\lambda=0)\int p_0(x_2\mid \lambda=0)\int p_0(x_1\mid x_2, \lambda=0)\log\theta(\psi,x_1,x_2)dx_1dx_2\\
        &=p_0(\lambda=1)E_{0\mid \lambda=1}\left[\text{KL}\left\{P_{0\mid X_2,\lambda=1}|| P_{\psi\mid X_2,\lambda=1}\right\}\right]\\
        &\;\;\;\;+p_0(\lambda=0)E_{0\mid \lambda=0}\left[\text{KL}\left\{P_{0\mid X_2,\lambda=0}|| P_{\psi\mid X_2,\lambda=0}\right\}\right]\\
        &\geq0
    \end{align*}
    Here we use $P_{0\mid X_2,\lambda}$ to denote the marginal distribution of $X_1$ given $X_2$ and $\lambda$. The corresponding probability density function is $p_0(x_1\mid X_2,\lambda)$. On the other hand, $P_{\psi\mid X_2,\lambda=1}$ represents a marginal distribution of $X_1$ with a probability function $p_0(x_1\mid X_2,\lambda=0)\psi(x_1,X_2)$. Similarly, $P_{\psi\mid X_2,\lambda=0}$ has the probability density function $p_0(x_1\mid X_2,\lambda=1)/\psi(x_1,X_2)$. We have that $d(\psi,\psi_0)=0$ if and only if $p_0(x_1\mid X_2,\lambda=0)\psi(x_1,X_2)=p_0(x_1\mid X_2, \lambda=1)$ and $p_0(x_1\mid X_2,\lambda=1)/\psi(x_1,X_2)=p_0(x_1\mid X_2, \lambda=0)$ a.s., i.e., $\psi=\psi_0$ a.s..
\end{proof}

Given this theorem, with an additional assumption that $\big|L(O,\psi)\big|<C$ for some constant $C<\infty$, a density ratio super learner based on this loss function will enjoy the oracle inequality established in \cite{laan_super_2007,dudoit_asymptotics_2005}.

\subsection{Simulation Study}

We conducted two numerical studies corresponding to the two causal inference scenarios stated in Section \ref{dr_in_ci}, using Monte Carlo simulation. 

In each of the simulations, we construct a density ratio super learner to estimate the nuisance density ratio parameter. The performance of our super learner is compared to that of a baseline estimator using their respective average hold-out risks, which are calculated based on the new loss function we introduce. The hold-out risk of an estimate is approximated numerically, by computing the empirical risk on a large hold-out dataset independent of the training dataset. Smaller hold-out risks imply superior estimates. Formally, suppose we have a hold-out dataset with empirical distribution $P_H$, an estimator $\hat{\Psi}$, we generate multiple training sets $P_{T_i},i\in\left\{1,\ldots,k\right\}$, the average hold-out risk of this estimator is given by $\frac{1}{k}\sum_{i=1}^kR(\hat{\Psi}(P_{T_i}),P_H)$.

In our simulations, we include both kernel- and classification-based learners as candidate learners in our super learners. In the mediation analysis setting, a classification-based learner has to contain two estimators to estimate $p(A\mid M,W)$ and $p(A\mid W)$ respectively, see Equation \ref{2_cl}. In our simulation, we build a classification-based learner using two classification super learners to estimate the probabilities. This learner is included in our final density ratio super learner and is made our baseline estimator. While in the LMTP simulation, a classification-based super learner contains only one classification estimator. We build a classification super learner as our baseline estimator, as this is adopted by other researchers \cite{diaz_nonparametric_2021}. The kernel-based learners are built on the KLIEP and RuLSIF algorithms \cite{sugiyama_dimensionality_2010, sugiyama_direct_2007, kanamori_least-squares_2009}.

\section{Results}

\subsection{Mediation Analysis Setting}

We generated 100 training datasets with sample size $n$ for each $n\in\left\{100,500,1000,2000\right\}$, and a hold-out dataset with sample size $10000$ from the following data generating mechanism:
\begin{align*}
    W &\sim N_{T_1} \mid \text{where $N_{T_1}$ is $N(5,4)$ truncated to the interval $\left[2,8\right]$} \\
    A\mid W &\sim \text{Bernoulli}\left\{0.6-0.35\mathbb{I}(W<4)-0.15\mathbb{I}(W>5)\right.\\
    &\;\;\;\;\;\;\;\;\;\;\;\;\;\;\;\;\;\;\;\left.+0.05\mathbb{I}(W<6)-0.15\mathbb{I}(W>7)\right\}\\
    M\mid W, A=1 &\sim \text{Beta}(0.6W+1, 0.7W)\\
    M\mid W, A=0 &\sim N_{T_W} \text{where $N_{T_W}$ is $N(0.1W,1)$ truncated to the interval $\left[0,1\right]$} \\
\end{align*}
The density ratio parameter of interest in this setting is $\psi_0(m,w)=\frac{p(m\mid w,A=1)}{p(m\mid w,A=0)}$. We build a density ratio super learner with a library of three kernel-based learners and one classification-based learner. The classification-based learner uses two classification super learners to estimate $p(A\mid,M,W)$ and $p(A\mid,W)$. This learner is referred to as the classification SL learner. We show the results of this simulation in Table \ref{tab:holdout_mediation} and Figure \ref{fig:holdout_mediation}. In Figure \ref{fig:holdout_mediation}, four sub-figures show the results under each sample size. In each of the sub-figures, learners are in ascending order based on their average risk-out risks. In the line of each learner, a single point represents the hold-out risk of an estimate generated with one training set, and the red cross represents the average hold-out risks over all the estimates based on different training sets.

\begin{table}[]
    \centering
    \begin{tabular}{c c c c c c}
        \hlineB{5}
         Learners $\backslash$ Sample Size & $n=100$ & $n=500$ & $n=1000$ & $n=2000$ \\
         \hlineB{3}
         Kernel-Based Learner1 & -0.164 & -0.051 & -0.045 & -0.039\\ 
         Kernel-Based Learner2 & -0.215 & -0.394 & -0.238 & -0.221\\
         Kernel-Based Learner3 & -0.234 & -0.412 & -0.243 & -0.209\\
         Classification SL Learner & -0.235 & -0.329 & -0.355 & -0.371\\
         Density Ratio Super Learner & -0.254 & -0.402 & -0.326 & -0.371\\ 
         \hlineB{3}
    \end{tabular}
    \caption{Average Hold-Out Risks For Individual Learners and the Super Learner}
    \label{tab:holdout_mediation}
\end{table}

\begin{figure}
    \centering
    \includegraphics[width=4.5in]{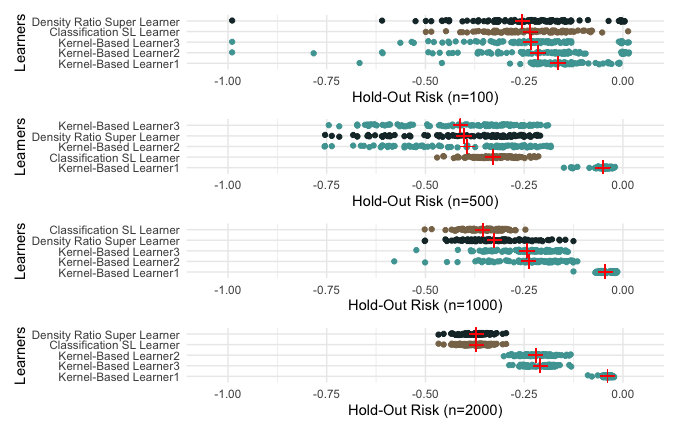}
    \caption{Average Hold-Out Risks For Individual Learners and the Super Learner}
    \label{fig:holdout_mediation}
\end{figure}

We observe that when the sample size is small, the density ratio super learner will outperform the classification SL learner. As the sample size increases, the classification SL learner will perform better than other kernel-based learners and the density ratio super learner is fully decided by the classification SL learner.

\begin{figure}
    \centering
    \includegraphics[width=4.5in]{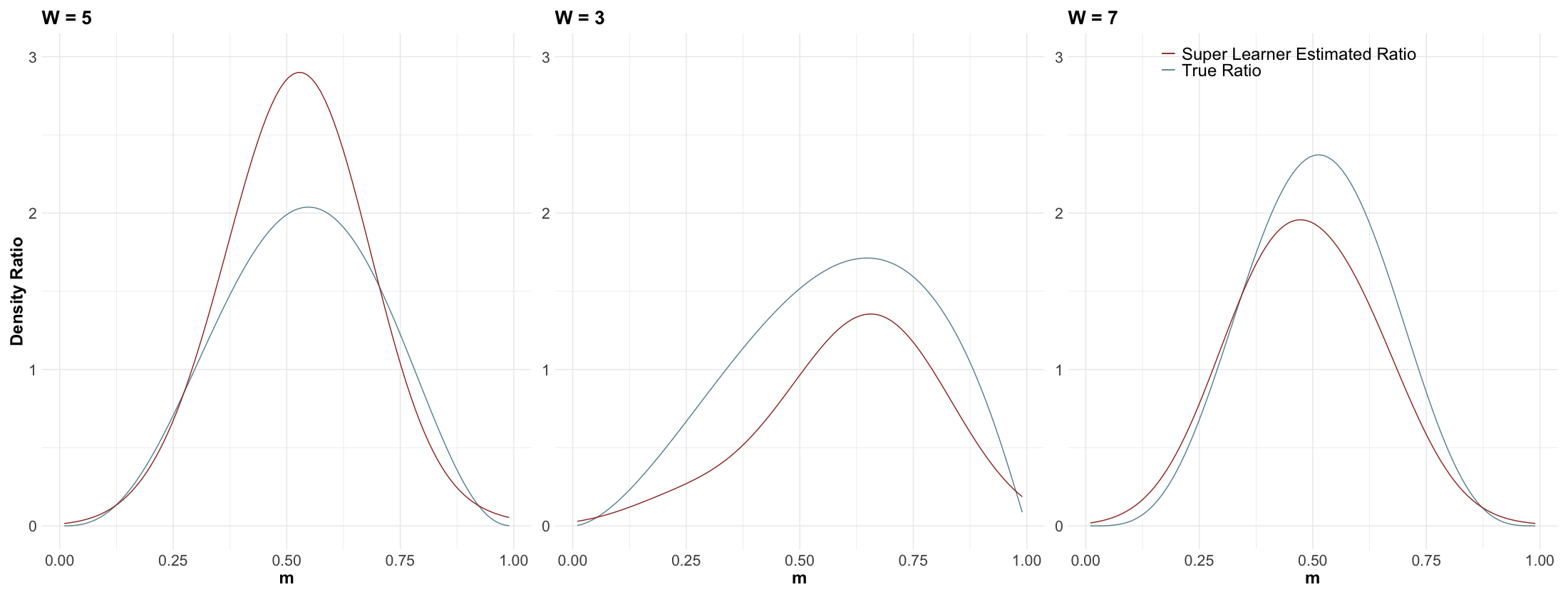}
    \caption{True Ratio vs Super Learner Estimated Ratio at Different Values of $W\;(n=700)$}
    \label{fig:profile_n700}
\end{figure}

\begin{figure}
    \centering
    \includegraphics[width=4.5in]{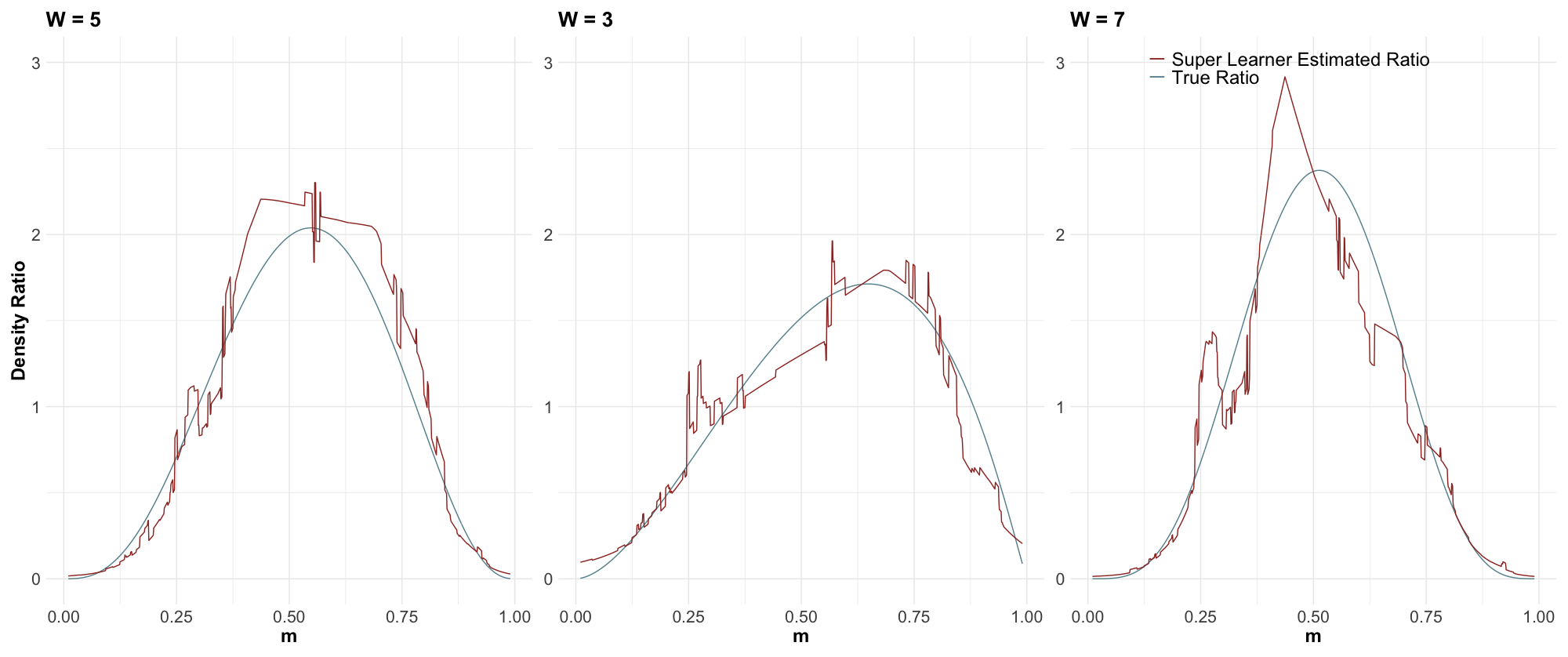}
    \caption{True Ratio vs Super Learner Estimated Ratio at Different Values of $W\;(n=2000)$}
    \label{fig:profile_n2000}
\end{figure}

Figure \ref{fig:profile_n700} and Figure \ref{fig:profile_n2000} give profiles of true ratios and density ratios estimated by the super learner with different values of $m$ and $w$ in one trial with $n=700$ and $n=2000$, respectively. We can see that as the sample size increases, the estimates given by the density ratio super learner tend to have smaller biases but larger variances locally.

\subsection{LMTP Setting}

We generated 100 training datasets with sample size $n$ for each $n\in\left\{100,500,1000,2000\right\}$, and a hold-out dataset with sample size $10000$ from the following data generating mechanism that is used in \cite{diaz_nonparametric_2021} :
\begin{align*}
    W_1 &\sim \text{Cat}(0.5,0.25,0.25)\\
    A_1\mid W_1 &\sim \text{Binomial}\left\{5,0.5\mathbb{I}(W_1>1)+0.1\mathbb{I}(W_1>2)\right\}\\
    W_t\mid(\bar{A}_{t-1},\bar{W}_{t-1})&\sim\text{Bernoulli}\left\{\text{expit}(-0.3L_{t-1}+0.5A_{t-1})\right\}\;\text{for}\;t\in \left\{2,3,4\right\}\\
    A_t\mid(\bar{A}_{t-1},\bar{W}_t)&\sim \left\{
    \begin{array}{ll}
         &\text{Binomial}\left\{5,\text{expit}(-2+1/(1+2W_t+A_{t-1}))\right\}\;\text{for}\;t\in\left\{2,3\right\}  \\
         &\text{Binomial} \left\{5,\text{expit}(1+W_t-3A_{t-1})\right\}\;\text{for}\;t=4
    \end{array}
    \right.
\end{align*}
with the longitudinal modified treatment policy
\begin{align*}
    \mathbbm{d}(a_t,h_t)=\left\{
        \begin{array}{cll}
             &  a_t-1\;\; &\text{if}\;a_t\geq 1\\
             & a_t &\text{if}\;a_t<1
        \end{array}
    \right.
\end{align*}
The density ratio parameters in this setting that need to be estimated are $r_t(a_t,h_t)=\ \frac {g_t^\mathbbm{d}(a_t\mid h_t)}{g_t(a_t\mid h_t)},t\in\left\{1,2,3,4\right\}$. We adopt the data-augmenting framework described in Section \ref{method:lmtp} to conduct the estimation process. For each $r_t$, we build a density ratio super learner containing kernel- and classification-based learners, and another plug-in estimator that uses a classification super learner to estimate the conditional odd as our baseline estimator. The results are presented in Table \ref{tab:holdout_lmtp}. We observe that the density ratio super learner performs better than the baseline estimator in all cases except for estimating $r_1$ in the $n=2000$ setting.

\begin{table}[]
    \centering
    \begin{tabular}{c c c c c c c}
        \hlineB{5}
        Sample Sizes & Learners $\backslash$ Estimand & $r_1$ & $r_2$ & $r_3$ & $r_4$ \\
        \hlineB{3}
        \multirow{2}{4em}{$n=100$} & Density Ratio SL & -0.329 & -0.719 & -0.774 & -0.570 \\
        & Classification SL & -0.303 & -0.638 & -0.659 & -0.222\\
        \multirow{2}{4em}{$n=500$} & Density Ratio SL & -0.316 & -0.673 & -0.909 & -0.989 \\
        & Classification SL & -0.304 & -0.644 & -0.666 & -0.473\\
        \multirow{2}{4em}{$n=1000$} & Density Ratio SL & -0.329 & -0.714 & -0.711 & -1.057 \\
        & Classification SL & -0.318 & -0.657 & -0.674 & -0.622\\
        \multirow{2}{4em}{$n=2000$} & Density Ratio SL & -0.348 & -0.721 & -0.697 & -1.135 \\
        & Classification SL & -0.351 & -0.672 & -0.686 & -0.769\\
        \hlineB{3}
    \end{tabular}
    \caption{Average Hold-Out Risks For the Two Estimators}
    \label{tab:holdout_lmtp}
\end{table}

\section{Discussion and Conclusion}

In this study, we propose a novel loss function tailored specifically for generating super learner estimates of density ratios. We show that this loss function is qualified for building super learners. Through two simulations, we empirically demonstrate how a density ratio super learner would perform in estimating density ratios in mediation analysis and longitudinal modified treatment policy settings. Our results indicate that the density ratio super learner would asymptotically minimize the risk in comparison to its candidate individual learners. Our results also show that the density ratio super learners could achieve more reliable performance compared to other baseline estimators used by past researchers.

Although our study focuses on causal inference scenarios, it is notable that the super learning approach can also be applied to estimating density ratios in other fields. For example, our density ratio super learner can be used to tackle the covariate shift problem in general machine learning.

\bibliographystyle{unsrt}
\bibliography{reference}

\end{document}